\documentclass[runningheads]{llncs}
\usepackage{tikz}
\usetikzlibrary{automata, positioning, arrows.meta}
\usepackage{graphicx}
\usepackage{amssymb}
\usepackage{amsmath}

\begin{document}
	
	\title{SMALL UNIVERSAL MULTISET REACTION SYSTEMS\thanks{This study was partially supported by Core Program within the National Research, Development and Innovation Plan 2022-2027, carried out with the support of ANC, project no. 23020101(SIA-PRO), contract no 7N/2022}
}
	
	\author{Annemarie-Beatrix Messner\inst{1} 
		\and Andrei P\u aun\inst{1,2,3} }
	
	\authorrunning{A. Messner and A. P\u aun}
	
	\institute{Faculty of Mathematics and Computer Science, University of Bucharest,\\Bucharest, Romania\\
		\email{annemarie-beatrix.messner@s.unibuc.ro, apaun@fmi.unibuc.ro}\and
		National Institute of Research and Development for Biological Sciences, \\ 296 Independenței Bd.
		District 6, 060031, Bucharest. Romania\\
		\email{andrei.paun@incdsb.ro}\and
		Research Institute for Artificial Intelligence (ICIA), Romanian Academy,\\ Bucharest, Romania.\\
\email{apaun@racai.ro}}
	
	\maketitle
\begin{abstract}
This paper presents a construction of a small universal machine within the framework of Multi-set Reaction Systems. We encode the machine state using elements that represent registers and instruction labels, and we enforce sequential execution by ensuring that reaction chains for a given instruction can only activate after the corresponding label element appears. We implement a universal register machine with 8 registers and 23 instructions, showing that the priority-based model requires 87 distinct elements. The results demonstrate that multiset reaction systems are capable of simulating universal computation.

	\keywords{Reaction \and Reaction system \and Interactive process \and multiset reaction system \and P systems}
\end{abstract}

\section{Introduction}

Reaction Systems (RS), introduced by Andrzej Ehrenfeucht and Grzegorz Rozenberg \cite{reaction1,reaction2}, provide a qualitative framework for modeling biochemical and regulatory processes based on the fundamental mechanisms of facilitation and inhibition. In this formalism, the state of the system is represented as a set of entities, and reactions are enabled solely by the presence or absence of these entities. While this abstraction successfully captures the logical structure of regulatory interactions, it deliberately neglects quantitative aspects such as molecular concentrations, multiplicities, and reaction rates.
In living cells, molecular species are typically present in multiple copies, and their abundance plays a decisive role in determining whether reactions occur and at what rate. The assumption that an entity is either present or absent ignores the influence of concentration on system dynamics and prevents the description of many important biological phenomena, including threshold effects, competition for limited resources, and stochastic fluctuations arising from finite molecular populations.

Introducing multisets into the framework naturally overcomes this limitation. A multiset associates each entity with a multiplicity, allowing the state of the system to represent molecular counts rather than mere existence. Reactions can then consume and produce specified numbers of molecules, providing a more faithful representation of biochemical transformations while preserving the structural simplicity of the reaction-system paradigm. This extension bridges the gap between purely qualitative models and quantitative models based on molecular populations.

A further step toward realism consists in enriching reactions with kinetic information. Classical Reaction Systems treat all enabled reactions as occurring simultaneously and without regard to their relative speeds. In contrast, chemical and biochemical reactions proceed at rates determined by intrinsic kinetic constants and by the concentrations of the reacting species. Associating a kinetic parameter with each reaction allows the evolution of the system to reflect not only the logical conditions for reaction occurrence but also the temporal scale on which reactions take place.

This quantitative extension naturally connects the framework to the law of mass action, one of the cornerstones of chemical kinetics. According to the law of mass action, the rate of an elementary reaction is proportional to the product of the concentrations (or molecular counts in a discrete setting) of its reactants, each raised to the power of its stoichiometric coefficient. By combining multisets with reaction kinetics, the enabling of reactions remains governed by the presence of reactants and the absence of inhibitors, as in the original Reaction Systems framework, while the frequency or probability of reaction execution becomes proportional to the corresponding mass-action rate. Consequently, the resulting formalism preserves the regulatory semantics that characterize Reaction Systems while incorporating the quantitative behavior that governs real biochemical processes.

This enriched framework therefore provides a natural bridge between qualitative regulatory models and quantitative kinetic models. It retains the conceptual elegance and compositional nature of Reaction Systems while extending their expressive power to describe concentration-dependent dynamics, stoichiometric constraints, and time-dependent behavior. Such an approach enables the study of biological systems at a level of abstraction that remains mathematically tractable yet is substantially closer to the mechanisms underlying real chemical and biochemical reactions.

We provide in the following a step in this direction by considering the multiset RS and changing the behaviour of the system in the sense that the reaction with most reactants will fire next in a sequential manner.

Multiset reaction systems, formally defined in \cite{multiset}, introduce the use of multisets instead of sets to represent the reactants, inhibitors and products of a reaction. This allows the model to be quantitative, as it can now take into account the quantity of a resource. The permanency principle is modified to allow substances that are not consumed to persist to the next state. This results in a model that is inherently nondeterministic as the reactions will have to compete for the available resources in a state. To model a specific multiset reaction system, we need to define the background set of elements and the set of possible reactions.

Small universal register machines were described  in \cite{korec1996small} and have been used as a standard model of computation to show the Turing completeness of various computational models in the membrane systems area, such as spiking neural P systems \cite{paun2007small} and \cite{Ibarra2009Sequential} or other connected research topics such as \cite{bernardini}, \cite{dfca}, \cite{ionescu}. In the following we will use simiar ideas from the aforementioned papers to the area of reaction systems.
In this paper we will show the implementation of a small universal register machine with 23 instructions using multiset reactions systems using max-spiking manner. 
This involves the modelling of the ADD, SUB and HALT instructions using reactions, as well as encoding the instruction labels and registers as elements in the background set.
We prove that this approach is universal and give an upper bound for the number of components of the system.

\section{Preliminaries}
We first recall the basic notions concerning reaction systems,
introduced in \cite{reaction1,reaction2}. Given a finite and nonempty set $S$, we say that a {\it reaction} in $S$ is a triplet $a=(R,I,P)$, where $R,I,P$ are subsets of $S$ and $R\cap I=\emptyset$. The elements of $S$ are informally called objects, the elements of $R$ are called {\it reactants}, the elements of $I$ are called {\it inhibitors}, while the elements of $P$ are called {\it products}.

A {\it reaction system} in $S$ is an ordered pair ${\cal A} = (S,A)$, where $A$ is a non-empty
set of reactions in $S$.
We now define the conditions for a reaction in $S$ to be enabled. Given 
the reaction system ${\cal A} = (S,A)$ and a subset $X$ of $S$, the reaction  $a = (R_a, I_a, P_a)$ is enabled by $X$, denoted by the boolean function $en_a(X)$, if $R_a \subseteq X$ and
$I_a \cap X = \emptyset$ holds. The result of $a$ on $X$, denoted by $res_a(X)$, is a set defined by $res_a(X) = P_a$ if $en_a(X)$, and $res_a(X) = \emptyset$, otherwise.
Clearly, $X\subseteq S$ may enable more than one reaction, and this is captured by the following definition. Let ${\cal A} = (S,A)$
be a reaction system with background set $S$ and let $X \subseteq S$. The subset of
reactions of $A$ enabled by $X$, denoted by $en(A,X)$, is defined as 
$en(A,X) =\{a \in A | en_a(X)\}$, and the result of $A$ on $X$, denoted by $res(A,X)$, is $res(A,X) = \displaystyle{\bigcup_{a\in en(A,X)} res_a(X)}$.
As one can see, the effect of a set of reactions enabled by a subset of $S$ is cumulative. In what follows, we say that $X\subseteq S$ is a state of the system.

The dynamics of reaction systems is determined by the interactions of successive reactions enabled by successive states. Thus, the system starts in an initial state enabling a set of reactions that produce a result which will be enhanced by some context and form the new state.
More precisely, the dynamics of a reaction system is described by an {\it interactive process} as follows.
Let  ${\cal A} = (S,A)$ be a reaction system,  an interactive process in 
${\cal A}$ is a pair of two finite sequences,
$\pi = (\alpha, \beta)$, where $\alpha = C_0,C_1, \dots,C_n$ is the {\it context sequence}, $\beta = D_1, D_2,\dots,D_n$ is the {\it result sequence}, with $n \ge 1$, and $C_i, D_j \subseteq S$, $0\le i\le n$,
$1\le j\le n$. The components of the context sequence are given and they represent the influence of the environment on the computation, while the components of the result sequence are computed by $D_1 = res(A,C_0)$, 
and $D_i = res(A,D_{i-1} \cup C_{i-1})$ for each $2 \le i \le n$.

\section{Multiset Reaction Systems}

In this model, a reaction is formally defined as a triple $(R, I, P)$ where $R$ is the \textit{set of reactants} $I$ is the \textit{set of inhibitors}, and $P$ is the \textit{set of products}.
Two fundamental principles govern the execution of reactions: \textit{activation} and \textit{inhibition}. 
A reaction is enabled when all of its reactants are present in the current state, but none if its inhibitors are. 
As a result, the enabled reaction produces a set of products in the next state that could either activate or inhibit other reactions.

The original reaction systems model is considered a qualitative model of biochemical interactions, as it does not take into account the quantity of a resource. 
If a reactant is available in a state, it is available in an infinite quantity. 
On the other hand, if an element is consumed in a reaction, then it is completely removed from the state.
In this model, elements cannot persist between states if they are not produced by a reaction. 

Furthermore, the original reaction systems model is a computation state-transition system in which successor states are generated by previously produced reactants and input from the environment. 
This makes the model \textit{context-dependent}. A model that does not consider the contribution of the environment is known as \textit{context-independent}.

Multiset Reaction Systems are a context-independent extension of the original reaction systems model that incorporates the quantity of a resource. 
The model is defined in \cite{multiset} and is based on replacing the sets of reactants, inhibitors, and products with multisets.
This allows the modification of the permanency principle. 
In this model, resources can persist between states if they are not entirely consumed by reactions. 
As each resource is available in a specific quantity, a constraint is effectively placed on the number of reactions that can be executed in a state. 
This results in an inherently quantitative and nondeterministic model. Now we will give the definitions of a few concepts related to multiset reaction systems.

Multiset reaction systems are formally defined in \cite{multiset} as a pair $\mathcal{A} = (S, A)$ where $S$ is a finite set of elements called the \textit{background set} and $A$ is a finite set of reactions.
$S$ is a finite set of at least two elements and its components are informally known as \textit{{objects}}. 
The elements of $A$ are called \textit{reactions} and are defined as triplets $(R, I, P)$ where $R$, $I$, and $P$ are multisets over $S$ representing the \textit{reactants}, \textit{inhibitors}, and \textit{products} of the reaction, respectively. 
$R$ and $I$ are disjoint multisets, meaning that they cannot share any common elements. We denote as $MRS(S)$ the set of all multiset
reaction systems over the background $S$. 

Multisets extend the definition of sets by allowing each element to be accompanied by a multiplicity, which indicates how many times that element appears in the multiset. 
In order to compute the number of copies for each element, we employ a \textit{membership function} $\sigma_X : U \to \mathbb{N}$, where $\sigma_X(x)$ denotes how many times the element $x$ from a finite set $U$ appears in the multiset $X$.
An \textit{empty multiset} is a multiset in which all elements have a multiplicity of zero and is known as $\epsilon$. 
The \textit{weight} of a multiset $X$ over $U$ is defined as:
 \[\lVert X\rVert = \sum_{a \in U} \sigma_X(a). \]

A reaction $a = (R, I, P)$ is \textit{enabled (activated)} by a state $T$ if $R$ is a submultiset of $T$ and $I$ is disjoint from $T$. 
When a reaction is enabled, it produces the multiset of products $P$ in the next state. 
We note whether a reaction $a$ is enabled by a state $T$ as $\mathrm{en}_a(T)$.
A key difference between the original reactions systems model and the multiset reaction systems model is that elements that are not removed as a result of a reaction can persist in the next state if not consumed.
This helps us define $\mathrm{res}_a(T)$, the result of executing a reaction $a$ in a state $T$ as:
\begin{equation*}
    \mathrm{res}_a(T) = \begin{cases}
        P + (T \setminus R) & \text{, if } \mathrm{en}_a(T) \\
        \emptyset & \text{, otherwise}
    \end{cases}
\end{equation*}
For a multiset reaction $\Delta$ in a multiset reactions system $\mathcal{A} = (S, A)$, we define $R_\Delta$, $I_\Delta$ as follows:
\begin{align*}
    R_\Delta &= \sum_{a \in A} \sigma_\Delta(a) \,\cdot\, R_a \\
        I_\Delta &= \bigcup_{\substack{a \in A \\
                                      \sigma_\Delta(a) \ge 1}} I_a
\end{align*}

The multiset reaction systems model defines a state transition system over $S$ with a computation path $T_0, T_1, T_2, ...$ where $T_{i+1} = \bigcup_{a \in A} \mathrm{res}_a(T_i)$ for all $i \geq 0$.
We consider that a state $T$ enables simultaneously the reactions of $\Delta$, noted as $\mathrm{en}_\Delta(T)$, if $R_\Delta$ is a submultiset of $T$ and $I_\Delta$ is disjoint from $T$.
The result of executing a multiset of reactions is defined as:
\begin{equation*}
    \mathrm{res}_\Delta(T) = 
        \sum_{a \in A} \sigma_\Delta(a) \,\cdot\, P_a + (T \setminus R_\Delta) \\
\end{equation*}

If there is a multiset of reactions $\Delta$ of weight $k$, such that $\mathrm{en}_\Delta(T)$ is True, then we call that a \textit{k-event}.
A \textit{1-event} is also known as a \textit{sequential event}.

An \textit{f-evolution} generated by $T$ is a sequence of multisets over S: $D_0, D_1, D_2, ...$ such that $D_0 = T$ and for all $i \geq 0$, the multiset $D_{i+1}$ is generated from $D_i$ by and \textit{f-event} such that: $D_{i+1} = \mathrm{res}_{\Delta_i}(D_i)$.

The computation of the machine $M_a\in MRS(S)$ where $a\in S$ is given by the multiplicity of the element $a$ in the halting position of the machine $M_a$. If the machine does not halt we do not consider that computation to genereate any output. Thus $L(M_a)=\{\sigma_X(a)\mid X\mbox{ is the halting state of the machine}\}$.

For computing multiset reaction systems, a special \textit{universal inhibitor}, noted as $\dagger$, is introduced and considered to be a part of the inhibitor set of every reactions.
Consequently, if $\dagger$ is present in a state, then no reaction can be enabled and the system halts.

\subsection{The register machines}
The register machine is a well-known computational model, composed of a set of registers and a finite set of instructions. 
Each of these register can store a non-negative integer value.
The machine operates by manipulating these values through a sequence of arithmetic operations and control flow branches.

Formally, the register machine is defined as $M = (m, H, L_0, L_h, I)$ where: 
\begin{itemize}
    \item $m$ is the number of registers, which are denoted as $R_0, R_1, ..., R_{m-1}$,
    \item $H$ is the finite set of instruction labels,
    \item $L_0 \in H$ is the initial instruction label,
    \item $L_h \in H$ is the halting instruction label,
    \item $I$ is a finite set of instructions, that are bijectively mapped to $H \setminus \{L_h\}$.
\end{itemize}

A register machine $M$ starts with all registers initialized to zero and proceeds to generate a set of number. 
First the instruction labeled $L_0$ is applied, and then the machine proceeds to execute instructions as defined by the labels.
If the instruction $L_h$ is reached, the machine halts and the final state of the register $R_0$ is considered the output of the computation.
If the machine does not halt, then no output is generated. 
All the numbers generated by a register machine $M$ compose the set denoted as $\mathcal{N}(M)$. 
All the numbers generated by a register machine are known to be Turing computable \cite{minsky1967}.

All functions that can be computed by a Turing machine can also be computed by register machines. 
For this we need to introduce the arguments in specific registers: $R_1, R_2, ..., R_k$ and then start the execution by running the instruction with the label $L_0$.
When the instruction labeled $L_h$ is reached and the machine halts, then the value of the function can be found in the register $R_0$.

In \cite{korec1996small}, the register machines are used for computing functions.
In this context, a register machine is defined as universal if there is a recursive function $g$ such that for all natural numbers $x$, $y$ we have $\phi_x(y) = M_u(g(x), y)$, where ($\phi_0, \phi_1, ...$) are a fixed admissible enumaration of the unary partial recursive functions \cite{korec1996small}.

Several strongly universal register machines are constructed in \cite{korec1996small}, having the the numbers $g(x)$ and $y$ introduced as input in registers $R_1$ and $R_2$.
The result is obtained in register $R_0$.

\section {Small universal systems}

Multiset Reaction Systems are universal in the max-sequentiality mode. To prove this we will show that any Turing Machine working of numbers  can be simulated by a MRS.

\begin{theorem}
Multiset Reaction Systems are universal in the max-sequentiality mode. 
\end{theorem}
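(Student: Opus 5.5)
Since register machines with at least three registers are computationally complete, and the machines of \cite{korec1996small} are universal, it suffices to show the following: for every register machine $M=(m,H,L_0,L_h,I)$ there is a multiset reaction system that simulates $M$ step by step in the max-sequentiality mode. In this mode exactly one enabled reaction fires per step, and among the enabled reactions one with the largest reactant multiset is chosen. The halting configuration must contain, in the multiplicity of a designated element, the value of register $R_0$.

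\textbf{Encoding.} The background set $S$ will contain:
\begin{itemize}
\item one element $r_i$ for each register $R_i$, so that the content of $R_i$ is $\sigma_T(r_i)$;
\item one element $l$ for each label $l\in H$, with exactly one label element present at any time;
\item the universal inhibitor $\dagger$;
\item a few auxiliary elements per SUB instruction, if needed.
\end{itemize}
The initial state is $\{L_0\}$, or $\{L_0, r_1^{g(x)}, r_2^{y}\}$ in the function-computing setting of \cite{korec1996small}. Unconsumed elements persist between states, so the register contents need no refreshing. This is the key advantage over classical reaction systems.

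\textbf{Simulating the instructions.} Each instruction is simulated as follows.
\begin{itemize}
\item An instruction $l_i:(\mathrm{ADD}(r),l_j,l_k)$ becomes the two reactions $(\{l_i\},\emptyset,\{l_j,r\})$ and $(\{l_i\},\emptyset,\{l_k,r\})$. These have equal weight, so either may be chosen nondeterministically. For a deterministic machine only $l_j$ is used.
\item An instruction $l_i:(\mathrm{SUB}(r),l_j,l_k)$ becomes the reaction $(\{l_i,r\},\emptyset,\{l_j\})$ with reactant weight $2$, together with the zero-test reaction $(\{l_i\},\{r\},\{l_k\})$ with reactant weight $1$. The inhibitor $r$ makes the zero branch enabled only when $\sigma_T(r)=0$. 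The max-sequentiality priority independently guarantees that the decrement is preferred whenever possible, so the two mechanisms give a double safeguard.
\item The halt label becomes the reaction $(\{L_h\},\emptyset,\{\dagger\})$. Alternatively, no reaction consumes $L_h$, so the system stops there.
\end{itemize}
The output is $\sigma(r_0)$ in the halting state, matching the definition of $L(M_a)$.

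\textbf{Correctness.} I would prove by induction on the number of steps that, after $t$ steps of $M$, the MRS is in state $\{l\}+\sum_i r_i^{v_i}$, where $l$ is the current label of $M$ and $v_i$ is the value of register $R_i$. The key point is that every reaction has its own label element among its reactants. Since exactly one label is present, only reactions of the current instruction can be enabled, and exactly one fires. This yields a bijection between computations of $M$ and evolutions of the MRS. In particular, halting is preserved and nonhalting computations produce no output.

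\textbf{Main obstacle.} The delicate step is the exact semantics of max-sequentiality. One must confirm that the inhibitor/priority combination in the SUB case never lets the zero branch fire on a nonzero register. One must also check that no reaction of a different instruction can compete, since only one label exists. If the model's inhibitor semantics were weaker, for instance if inhibitors had to be nonempty multisets of a specific weight, I would rely solely on the priority of the weight-2 reaction. Applying the construction to Korec's universal machine with 8 registers and 23 instructions then gives the universal system, and counting its elements yields the size bound.
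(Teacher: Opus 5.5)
Your construction is correct, and it reaches the theorem by a noticeably simpler route than the paper.

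The paper adopts the $2n+1$ encoding of sequential SN P systems, so its modules are more elaborate:
\begin{itemize}
\item ADD produces two copies of $R_r$.
\item SUB is a chain of seven reactions with four auxiliary species $Z_{i1},\dots,Z_{i4}$ per instruction. The branch is decided by whether $s_{i2}$, which needs three copies of $R_r$, is enabled. The remaining steps are forced by weight comparisons such as $s_{i4}$ (weight $3$) against $s_{i7}$ (weight $2$).
\item HALT needs a four-reaction decoding phase with $H_1,H_2$ to turn $2n+1$ copies of $R_0$ into $n$ before releasing $\dagger$.
\end{itemize}
Throughout, $\dagger$ is the only inhibitor used.

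You keep the plain unary encoding and let a single weight comparison perform the zero test. The inhibitor $r$ in your zero branch is actually superfluous. The pair $(\{l_i,r\},\{\dagger\},\{l_j\})$, $(\{l_i\},\{\dagger\},\{l_k\})$ already works by priority alone, so your system is as inhibitor-free as the paper's.

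Your route buys several things:
\begin{itemize}
\item one MRS step per machine step, and no decoding at halt;
\item an invariant (one label element plus the register contents) that makes the induction immediate;
\item robustness to whether the ``number of reactants'' is counted with multiplicity. The paper's SUB chain depends on counting multiplicities, e.g.\ at the $s_{i4}$ versus $s_{i7}$ choice.
\end{itemize}
For Korec's machine, your system consists only of the register elements, the label elements and possibly $\dagger$, with no auxiliary species. That is well below the paper's count of $87$.

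The paper's more elaborate modules mainly reflect the SN P system template it follows. They are not needed for the universality claim itself.
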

\proof
The proof proceeds by simulating the set $T$ generated by the TM through a register machine, thus we start by encoding the registers and instructions labels as elements in the background set and then constructing the instruction set for a small non-deterministic universal machine as chains of reactions, effectively showing how each instruction can be represented within the multiset reaction system formalism.
 Given the register machine we consider the element $R_0$ as the encoding of the register 0, thus we will prove that there is $M_{R_0}\in MRS(S)$ where $R_0\in S$ that for the element $n\in T$ will have the value $2n$ in the halting computation of the machine $M_{R_0}$.

The instructions we consider are:

\begin{description}
    \item[$L_i$]: $\mathrm{ADD}(R_i, L_j, L_k)$ - Add $1$ to the value stored in register $R_i$, then jump non-deterministically to either instruction with label $L_j$ or to instruction with label $L_k$.
    \item[$L_i$]: $\mathrm{SUB}(R_i, L_j, L_k)$ - If the value stored in register $R_i$ is not zero, then subtract $1$ from $R_i$ and jump to the instruction with label $L_j$, otherwise jump to the instruction with label $L_k$.
    \item[$L_h$]: $\mathrm{HALT}$ - the machine stops execution.
\end{description}

Starting from a register machine ${M(m, H, L_0, L_h, I)}$ we will construct a multiset reaction system $\mathcal{A}$ = $(S, A)$ that simulates the behavior of $M$. 
The background set $S$ will contain elements modelling the state of the registers, the instruction labels, and auxiliary elements needed for the simulation. 
The set of reactions $A$ will include reactions that form each type of instruction in the register machine.
We associate to each register in $M$ an element in $S$ such as the number of copies of that element in a state corresponds to the value stored in the register. 
All registers, denoted by ($0,1,2,...,m-1$), will have a corresponding element $R_r$ such that $r = 0, 1, ..., m-1$. 
If the value stored in register $R_r$ is $k$, then the state of the multiset reaction system will contain $k$ copies of the element $R_r$.
The output register is register 0, therefore the number computed by the machine will be the number of copies of $R_0$ in the state when the machine halts.
We also associate to each instruction label $L_i$ from $M$ an element $L_i$ in $S$ such that the presence of that element in a state will trigger the equivalent set of reactions for that label to be executed.

Reactions enabled within an instruction will be noted as $a_{ij}$ (for ADD instructions) and $s_{ij}$ (for SUB instructions) where $i$ corresponds to the instruction label and $j$ is an index to distinguish between different reactions within the same instruction.
For the halt instruction we will simply note the specific reactions used in the implementation with $h_i$, where $i$ is used to define between different reactions.

Finally, we introduce auxiliary elements in $S$ to help with the simulation of the instructions. 
Such elements will be noted as $Z_{ij}$ where $i$ coresponds to the instruction label and $j$ is an index to distinguish between different auxiliary elements for the same instruction.
We will note the auxiliary elements used in the implementation of the halt instruction with $H_i$ where $i$ is used to define between different reactants.

We propose a setting that will be able to do  the simulation of the register machines:  imposing sequentiality for the execution of reactions via a priority rule in the rection systems.

This approach was inspired by the max-sequentiality spiking of neural P systems \cite{Ionescu2006Spiking} . 
The \textit{max-sequentiality} condition for spiking neural P Systems described in \cite{Ibarra2009Sequential} states that at every step, the systems chooses only one neuron that can fire from all active neurons, based on the maximum number of spikes stored. 
Similarly our multiset reaction system will execute only one reaction at the time and the reaction to be executed will be chosen from all currently enabled reactions by the maximum number of reactants. 

The sequential SNP system encodes the number $n$ stored within a register as $2n + 3$ spikes within a coresponding neuron. 
We will use a similar encoding for the number stored in a register. Therefore, if the number $n$ is stored in register $R_r$, then the state of the multiset reaction system will contain $2n + 1$ copies of the element $R_r$.

Using these rules, we will now define the ADD, SUB and HALT instructions for small universal machines:

\subsection{The ADD Instruction}
We define the ADD instruction as follows: $L_i$: $\mathrm{ADD}(R_r, L_j, L_k)$.
This instructions adds $1$ to the value stored in register $R_r$, then jumps non-deterministically to either the instruction with label $L_j$ or to the instruction with label $L_k$.
For this instruction we define the following background set of elements and reactions:
\begin{align*}
    S_{ADD} &= \{L_i, L_j, L_k, R_r\} \\
    A_{ADD} &= \{a_{i1}, a_{i2}\} \\
    a_{i1} &= (\{L_i\}, \{\dagger\}, \{R_r, R_r, L_j\}), \\
    a_{i2} &= (\{L_i\}, \{\dagger\}, \{R_r, R_r, L_k\})
\end{align*}
The presence of $L_i$ in a state will trigger the execution of the ADD instruction, which will produce two copies of the element $R_r$ and either $L_j$ or $L_k$ depending on the reaction that is executed.
Generating two copies of $R_r$ effectively adds $1$ to the value stored in register $R_r$ as we are using the encoding where the number $n$ is represented by $2n + 1$ copies of the element $R_r$.
In multiset reaction systems, each resource is available in a specific quantity, introducing a constraint on the execution of reactions. 
Both reactions have the same number of reactants, but only one of the two reactions enabled by the presence of $L_i$ can be executed, so the choice of which reaction to execute is non-deterministic. 
The execution flow of the reactions can be visualized in Figure \ref{fig:add_priority_state_transition}.
\begin{figure}[h]
    \centering
    \begin{tikzpicture}[
        shorten >=1pt, 
        node distance=4.5cm, 
        on grid, 
        auto,
        % Style for State Boxes
        state_style/.style={
            rectangle, 
            draw=teal!70, 
            fill=teal!5, 
            very thick, 
            rounded corners, 
            inner sep=10pt, 
            align=center,
            font=\small
        },
        % Style for Final Out-State
        final_style/.style={
            rectangle, 
            draw=purple!70, 
            fill=purple!5, 
            very thick, 
            rounded corners, 
            inner sep=10pt, 
            align=center,
            font=\small
        },
        >={Stealth[round]},
        thick
    ]

        % Initial State
        \node[state_style] (S0) {
            \textbf{State 0 (Initial)} \\ 
            $R_r: 2n + 1$ \\ 
            $L_i: 1$
        };

        % Upper Branch: State after reaction a1
        \node[final_style] (S1) [below left=of S0] {
            \textbf{Terminal State $L_j$} \\ 
            $R_r: 2n + 3$ \\ 
            $L_j: 1$
        };

        % Lower Branch: State after reaction a2
        \node[final_style] (S2) [below right=of S0] {
            \textbf{Terminal State $L_k$} \\ 
            $R_r: 2n + 3$ \\ 
            $L_k: 1$
        };

        % Connective Transitions
        \path[->] 
            (S0) edge[left=15]  node[above left]  {$a_{i1}$} (S1)
            (S0) edge[right=15] node[above right] {$a_{i2}$} (S2);

    \end{tikzpicture}
    \caption{The ADD module with priority-based sequential reaction execution.}
    \label{fig:add_priority_state_transition}
\end{figure}

\subsection{The SUB Instruction}
The SUB instruction is defined as follows: $L_i$: $\mathrm{SUB}(R_r, L_j, L_k)$. 
This instruction checks if the value stored in register $R_r$ is not zero, then subtracts $1$ from $R_r$ and jumps to the instruction with label $L_j$.
Otherwise the value in $R_r$ remains zero and the program jumps to the instruction with label $L_k$.
For this instruction we define the following background set of elements and reactions:
\begin{align*}
    S_{SUB} &= \{L_i, L_j, L_k, R_r, Z_{i1}, Z_{i2}, Z_{i3}, Z_{i4}\} \\
    A_{SUB} &= \{s_{i1}, s_{i2}, s_{i3}, s_{i4}, s_{i5}, s_{i6}, s_{i7}\} \\
    s_{i1} &= (\{L_i\}, \{\dagger\}, \{Z_{i1}, Z_{i1}, Z_{i3}\}), \\
    s_{i2} &= (\{R_r, R_r, R_r, Z_{i3}\}, \{\dagger\}, \{Z_{i1}, Z_{i2}, R_r\})\\
    s_{i3} &= (\{Z_{i1}, Z_{i1}\}, \{\dagger\}, \{R_r, R_r\}), \\
    s_{i4} &= (\{Z_{i1}, Z_{i1}, Z_{i1}\}, \{\dagger\}, \{Z_{i2}, Z_{i4}\}), \\
    s_{i5} &= (\{Z_{i2}, Z_{i2}\}, \{\dagger\}, \emptyset), \\
    s_{i6} &= (\{Z_{i4}\}, \{\dagger\}, \{L_j\}), \\
    s_{i7} &= (\{Z_{i1}, Z_{i2}\}, \{\dagger\}, \{L_k\})
\end{align*}

The execution flow of the SUB instruction can be examined in Figure \ref{fig:sub_priority_state_transition}.
The first reaction enabled is $s_{i1}$ as it is the only reaction enabled by the presence of $L_i$ in the state.
The activation of $s_{i1}$ produces two copies of $Z_{i1}$ and one copy of $Z_{i3}$, which enables the reactions $s_{i2}$ and $s_{i3}$.
Reactions $s_{i2}$ and $s_{i3}$ have 4 and 2 reactants respectively, so $s_{i2}$ is the reaction that will be executed as it has the highest number of reactants.
However, reaction $s_{i2}$ can only be executed if there are at least 3 copies of $R_r$ in the state, which corresponds to the value stored in register $R_r$ being at least 1. 
This is where the execution of the reactions chain branches based on the value stored in $R_r$. 

First, we will examine the case where the value stored in $R_r$ is at least 1. 
In this case, $s_{i2}$ is enabled and will be executed first, producing: a copy of the element $Z_{i1}$, a copy of the element $Z_{i2}$, and a copy of the element $R_r$; and consumes 3 copies of $R_r$ and one copy of $Z_{i3}$.
After this state transition, the number of copies of $R_r$ is reduced by 2, which corresponds to the subtraction of 1 from the value stored in register $R_r$.
Additionally, there are no more copies of $Z_{i3}$ in the state, which means that $s_{i2}$ is no longer enabled, and the only enabled reactions are $s_{i3}$, $s_{i4}$, and $s_{i7}$.
Reaction $s_{i4}$ has the highest number of reactants, so it will be executed next, producing a copy of $Z_{i2}$, a copy of $Z_{i4}$, and consuming 3 copies of $Z_{i1}$.
After executing $s_{i4}$, the only enabled reactions are $s_{i5}$ and $s_{i6}$, which require 2 and 1 reactants respectively.
Naturally $s_{i5}$ will be activated next, consuming 2 copies of $Z_{i2}$ and producing nothing, which leaves $s_{i6}$ as the only enabled reaction.
The execution of $s_{i6}$ produces a copy of $L_j$, which corresponds to the jump to instruction $L_j$ in the register machine.

Now we will examine the case where the value stored in $R_r$ is zero.
In this case, the number of copies of the element $R_r$ is 1, so reaction $s_{i2}$ is not enabled and that leaves $s_{i3}$ as the only reaction that can be activated.
After the execution of $s_{i3}$, there are no more copies of $Z_{i1}$ in the state, which means that $s_{i3}$ cannot be enabled again. 
Additionally, two more copies of $R_r$ are produced, which means that $s_{i2}$ is now enabled. 
The execution of $s_{i2}$ produces a copy of $Z_{i1}$, a copy of $Z_{i2}$, and returns the number of copies of $R_r$ to their original count.
The only enabled reaction now is $s_{i7}$, which produces a copy of $L_k$, corresponding to the jump to instruction $L_k$ in the register machine.

\begin{figure}[!ht]
    \centering
    \begin{tikzpicture}[
        shorten >=1pt, 
        node distance=3cm, 
        on grid, 
        auto,
        % Style for State Boxes
        state_style/.style={
            rectangle, 
            draw=teal!70, 
            fill=teal!5, 
            very thick, 
            rounded corners, 
            inner sep=10pt, 
            align=center,
            font=\small
        },
        % Style for Final Out-State
        final_style/.style={
            rectangle, 
            draw=purple!70, 
            fill=purple!5, 
            very thick, 
            rounded corners, 
            inner sep=10pt, 
            align=center,
            font=\small
        },
        >={Stealth[round]},
        thick
    ]

        % State 0: Start
        \node[state_style] (S0) {
            \textbf{State 0 (Initial)} \\ 
            $R_r: 2n + 1$ \\ 
            $L_i: 1$
        };

        % State 1: Only s1 can fire (requires Li)
        \node[state_style] (S1) [below=of S0] {
            \textbf{State 1} \\ 
            $R_r: 2n + 1$ \\ 
            $Z_{i1}: 2, \; Z_{i3}: 1$
        };

        % State 2: s3 takes priority over s2 (s2 is no longer enabled)
        \node[state_style] (S2) [below left=of S1] {
            \textbf{State 2} \\ 
            $R_r: 2n - 1$ \\ 
            $Z_{i1}: 3, \; Z_{i2}: 1$
        };

        \node[state_style] (S3) [below=of S2] {
            \textbf{State 3} \\ 
            $R_r: 2n - 1$ \\ 
            $Z_{i2}: 2, \; Z_{i4}: 1$
        };

        \node[state_style] (S4) [below=of S3] {
            \textbf{State 4} \\ 
            $R_r: 2n - 1$ \\ 
            $Z_{i4}: 1$
        };

        % State 3: s7 takes priority over s3 and s5 (2 reactants vs 1 or 2 with lower weight)
        \node[final_style] (S7) [below=of S4] {
            \textbf{Terminal State $L_j$} \\ 
            $R_r: 2n - 1$ \\ 
            $L_j: 1$
        };

        % State 2: s2 takes priority over s3 (4 reactants vs 1)
        \node[state_style] (S5) [below right=of S1] {
            \textbf{State 5} \\ 
            $R_r: 2n + 3$ \\ 
            $Z_{i3}: 1$
        };

        % State 2: s2 takes priority over s3 (4 reactants vs 1)
        \node[state_style] (S6) [below=of S5] {
            \textbf{State 6} \\ 
            $R_r: 2n + 1$  \\
            $Z_{i1}: 1$ \\
            $Z_{i2}: 1$ 
        };

        \node[final_style] (S8) [below=of S6] {
            \textbf{Terminal State $L_k$} \\ 
            $R_r: 2n + 1$ \\ 
            $L_k: 1$
        };

        % Linear Transitions showing priority choice
        \path[->] 
            (S0) edge node[right] {$s_{i1}$} 
                       node[right, font=\scriptsize, text=gray] {} (S1)
            (S1) edge node[right] {$s_{i2}$} 
                       node[left, font=\scriptsize, text=gray] {($n \ge 1$)} (S2)
            (S2) edge node[left] {$s_{i4}$} 
                       node[right , font=\scriptsize, text=gray] {(3 reactants)} (S3)
            (S3) edge node[left] {$s_{i5}$} 
                       node[right, font=\scriptsize, text=gray] {(2 reactants)} (S4)
            (S4) edge node[left] {$s_{i6}$} 
                       node[right, font=\scriptsize, text=gray] {(1 reactant)} (S7)
            (S1) edge node[left] {$s_{i3}$} 
                       node[right, font=\scriptsize, text=gray] {($n = 0$)} (S5)
            (S5) edge node[left] {$s_{i2}$} 
                       node[right, font=\scriptsize, text=gray] {(4 reactants)} (S6)
            (S6) edge node[left] {$s_{i7}$} 
                       node[right, font=\scriptsize, text=gray] {(2 reactants)} (S8);

    \end{tikzpicture}
    \caption{The SUB module with priority-based sequential reaction execution.}
    \label{fig:sub_priority_state_transition}
\end{figure}

\subsection{The HALT Instruction}
The HALT instruction is defined as $L_h$: $\mathrm{HALT}$. 
When $L_h$ is fired we need to compute the exact number $n$ stored in the register $R_0$ and then produce a copy of the element $\dagger$ to stop all possible reactions.
To accomplish this, we define the following background set of elements and reactions:
\begin{align*}
    S_{HALT} &= \{L_h, R_0, H_1, H_2\} \\
    A_{HALT} &= \{h_1, h_2, h_3, h_4\} \\
    h_1 &= (\{L_h, R_0, R_0\}, \{\dagger\}, \{L_h, H_{1}\}), \\
    h_2 &= (\{L_h, R_0\}, \{\dagger\}, \{H_2\}) \\
    h_3 &= (\{H_1, H_2\}, \{\dagger\}, \{R_0, H_2\}), \\
    h_4 &= (\{H_2\}, \{\dagger\}, \{\dagger\}), \\
\end{align*}

Before the execution of the HALT instruction, the state contains $2n + 1$ copies of $R_0$.
The existence of $L_h$ in the state will enable the execution of reactions $h_1$ and $h_2$.
If $n \ge 1$, then $h_1$ is the reaction with the highest number of reactants, so it will be executed first, producing a copy of $H_1$ and consuming two copies of $R_0$.
As long as there are enough copies of $R_0$ in the state, we can keep executing reaction $h_1$, effectively producing $n$ copies of $H_1$ and consuming $2n$ copies of $R_0$.
This leaves us with only one copy of $R_0$ and one copy of $L_h$ in the state.
Now the only enabled reaction is $h_2$, which produces a copy of $H_2$ and consumes the last copy of $R_0$ and the last copy of $L_h$.
In this next state, both $h_3$ and $h_4$ are enabled, but $h_3$ has more reactants, so it will be executed first.
We will now proceed to execute reaction $h_3$ and restore the value $n$ in the register $R_0$.
Activating $h_3$ produces a copy of $R_0$ and consumes one copy of $H_1$ and one copy of $H_2$.
The element $H_2$ is both consumed and produced by $h_3$, ensuring that all reactions happen sequentially. 
When there are no more copies of $H_1$ in the state, the only enabled reaction is $h_4$, which produces a copy of the universal inhibitor and consumes the last copy of $H_2$, effectively halting the system.

\begin{figure}[!ht]
    \centering
    \begin{tikzpicture}[
        shorten >=1pt, 
        node distance=2.7cm, 
        on grid, 
        auto,
        % Style for State Boxes
        state_style/.style={
            rectangle, 
            draw=teal!70, 
            fill=teal!5, 
            very thick, 
            rounded corners, 
            inner sep=10pt, 
            align=center,
            font=\small
        },
        % Style for Final Out-State
        final_style/.style={
            rectangle, 
            draw=purple!70, 
            fill=purple!5, 
            very thick, 
            rounded corners, 
            inner sep=10pt, 
            align=center,
            font=\small
        },
        >={Stealth[round]},
        thick
    ]

        % Initial State
        \node[state_style] (S0) {
            \textbf{State 0 (Initial)} \\ 
            $R_0: 2n + 1$ \\ 
            $L_h: 1$
        };

        \node[state_style] (S1) [below=of S0] {
            \textbf{State 1} \\ 
            $R_0: 1$ \\ 
            $L_h: 1$ \\
            $H_1: n$
        };

        \node[state_style] (S2) [below=of S1] {
            \textbf{State 2} \\ 
            $H_1: n$ \\ 
            $H_2: 1$
        };

         \node[state_style] (S3) [below=of S2] {
            \textbf{State 3} \\ 
            $R_0: n$ \\ 
            $H_2: 1$
        };

        \node[final_style] (S4) [below=of S3] {
            \textbf{Terminal State} \\ 
            $R_0: n$ \\ 
            $\dagger: 1$
        };

        % Connective Transitions
        \path[->] 
            (S0) edge[left=15]  node[above right]  {n x $h_1$} (S1)
            (S1) edge[right=15] node[above right] {$h_2$} (S2)
            (S2) edge[right=15] node[above right] {n x $h_3$} (S3)
            (S3) edge[right=15] node[above right] {$h_4$} (S4);

    \end{tikzpicture}
    \caption{The HALT module with priority-based sequential reactions execution.}
    \label{fig:halt_priority_state_transition}
\end{figure}

\section{Conclusions}

We introduced the implementation of a small universal machine of  23 instructions using multiset reactions systems.
We encoded the registers and instructions labels as specific elements. 
We have shown that all types of instructions can be modeled using reaction chains.
Reactions chains that model a specific instruction can only be enabled once the element encoding the instruction label is generated, ensuring that instructions are executed sequentially.
%We designed the instructions as isolated sets of reactions using two approaches: using priority-based sequentiality and using inhibitors.

The  approach considers that reactions are executed sequentially based on a priority order defined by the number of reactants (max sequentiality). 
This means that for all enabled reactions in a state, the reaction with the most number of reactants will be chosen to activate. 
We have concluded that 87 distinct elements are needed to build the small universal machine using this approach.

%The second approach uses inhibitors to control the flow of reactions.  In this case, the order of execution is enforced by the availability of the reactants in a specific quantity and the absence of specific inhibitors. A one-state waiting phase is introduced between the execution of two reactions to test the existance of a specific element. Building the small universal machine using this approach requires 83 distinct elements.

Further research could consider the use of other approaches to enforce sequentiality, namely the variation in execution time or using inhibitors to control the flow of reactions.
Moreover, the number of elements used could be reduced by optimizing the number of auxiliary elements needed to build the reaction chains, or by implementing a different small universal machine model with fewer instructions. All these results may be applicable also to models based on the work from \cite{cheruku}, \cite{blind} or \cite{numerical} which are interesting avenues of further research.

\end{document}